%% file: main.tex
\documentclass{article}

\usepackage[preprint]{neurips_2026}

\usepackage[utf8]{inputenc} 
\usepackage[T1]{fontenc}    
\usepackage{hyperref}       
\usepackage{url}            
\usepackage{booktabs}       
\usepackage{amsfonts}       
\usepackage{nicefrac}       
\usepackage{microtype}      
\usepackage{xcolor}         
\usepackage{amsmath}
\usepackage{amssymb}
\usepackage{graphicx}
\usepackage{enumitem}
\usepackage{bm}

\usepackage{amsthm}
\newtheorem{theorem}{Theorem}[section]  

\usepackage{algorithm}
\usepackage{algpseudocode}

\usepackage{tikz}
\usetikzlibrary{fit, positioning, arrows.meta}   

\usepackage{subcaption}

\usepackage[table]{xcolor}

\definecolor{forestgreen}{rgb}{0.13, 0.40, 0.13}
\newcommand{\grad}[1]{%
  \ifdim #1pt>98pt
    \cellcolor{forestgreen!50!black}\textcolor{white}{#1}%
  \else\ifdim #1pt>94pt
    \cellcolor{forestgreen!100}\textcolor{white}{#1}%
  \else\ifdim #1pt>84pt
    \cellcolor{forestgreen!85}\textcolor{black}{#1}%
  \else\ifdim #1pt>74pt
    \cellcolor{forestgreen!70}\textcolor{black}{#1}%
  \else\ifdim #1pt>70pt
    \cellcolor{forestgreen!55}\textcolor{black}{#1}%
  \else\ifdim #1pt>64pt
    \cellcolor{forestgreen!40}\textcolor{black}{#1}%
  \else\ifdim #1pt>54pt
    \cellcolor{forestgreen!25}\textcolor{black}{#1}%
  \else\ifdim #1pt>44pt
    \cellcolor{forestgreen!10}\textcolor{black}{#1}%
  \else
    \cellcolor{forestgreen!6}\textcolor{black}{#1}%
  \fi\fi\fi\fi\fi\fi\fi\fi
}

\title{Federated Cross-Client Subgraph Pattern Detection 
}

\author{%
  Selin Ceydeli$^{1}$, Rui Wang$^{1}$, Kubilay Atasu$^{1}$ \\
  $^{1}$Delft University of Technology, Department of Software Technology \\
  Van Mourik Broekmanweg 6, 2628 XE Delft, The Netherlands \\
  \texttt{S.Ceydeli@student.tudelft.nl}, 
  \texttt{\{R.Wang-8, Kubilay.Atasu\}@tudelft.nl}
}

\begin{document}

\maketitle

\begin{abstract}
Subgraph pattern detection aims to uncover complex interaction structures in graphs. However, state-of-the-art graph neural network (GNN)-based solutions assume centralized access to the entire graph. When graphs are instead distributed across multiple parties, client-local GNN computations diverge from those of a centralized model, resulting in a representation-equivalence gap. We formalize this as a structural observability problem, where subgraph patterns crossing partition boundaries become locally unidentifiable. To bridge this gap, we propose a per-step, layer-wise embedding exchange framework in which clients synchronize intermediate node representations at each layer of the forward pass, without exposing raw features or labels. Under an extended-subgraph assumption and shared model parameters across clients, this framework recovers the same node representations as a centralized GNN over the full graph. Experiments on synthetic directed multigraphs with cycles, bicliques, and scatter-gather patterns show that embedding exchange and federated parameter aggregation are complementary rather than interchangeable: their combination recovers most of the representation gap, provided exchanged embeddings are fresh per-step rather than stale per-epoch.

\end{abstract}

\input{sections/1introduction}

\input{sections/2related-work}

\input{sections/4problem}
\input{sections/5solution}

\input{sections/6experiments}
\input{sections/7conclusion}

\bibliographystyle{plainnat}
\bibliography{references}

\appendix

\input{appendices/extended-proof}

\input{appendices/label-dist}
\input{appendices/compute-resources}

\input{appendices/background}
\input{appendices/discussion}

\end{document}

%% file: sections/1introduction.tex
\section{Introduction}

Subgraph pattern detection \cite{guo2025improving,ramachandran2024iteratively,roy2022maximum} aims to identify complex interaction structures within graphs, rather than classifying isolated nodes or edges. It is central to applications in financial crime analysis \cite{li2020flowscope}, cybersecurity \cite{jia2024magic}, and bioinformatics \cite{luong2023fragment}. The essence of these applications often lies in structured data of repeated, cyclic, or otherwise characteristic interactions. Traditional approaches rely on combinatorial algorithms \cite{cordella2004sub,ullmann1976algorithm} to enumerate or detect such patterns. These methods are powerful when the full graph is available and the patterns are clearly specified. However, they become less suitable when graphs are attributed, noisy, partially observed, or dynamically evolving. In these settings, graph neural networks (GNNs) \cite{wu2022graph} offer an attractive alternative by learning functional substructures and node representations directly from data.

State-of-the-art GNNs for subgraph pattern detection typically assume centralized data, relying on the entire graph to propagate information \cite{vignac2020building, you2021identity, huang2023i2gnn, eliasof2023rfp, tahmasebi2023power, egressy2023provably_powerful}.
In practice, the graph is often distributed: each client holds only a subset of the global graph, and cross-client patterns must be discovered collaboratively.  
A natural approach is to treat this as a federated learning (FL) problem, with clients training local GNNs and aggregating parameters through a server~\cite{McMahan2017_FedAvg}. 

However, federated parameter aggregation alone does not resolve the key obstacle: each client sees only a portion of any subgraph pattern whose full substructure spans multiple clients. We refer to it as the \emph{structural observability problem}. When the observability is limited, the representations clients compute diverge from those a centralized model would produce, a discrepancy we call the \emph{representation-equivalence gap}. In such a setting, a client with only a truncated view cannot reproduce centralized embeddings, manifesting as subgraph patterns that become locally unidentifiable. A globally closed cycle, for example, may locally appear as an open path to the participating clients.

Take anti-money laundering in financial transaction networks, for instance, which is one of the main motivating applications of our work. \autoref{fig:aml_patterns} illustrates three well-known money laundering patterns \cite{Altman2023_RealisticAML} considered in our evaluation. Directed cycles (a) route funds back to the originator through intermediate accounts, scatter-gather flows (b) fragment and later re-consolidate transfers, and directed bicliques (c) capture coordinated many-to-many transfers between groups of accounts. Importantly, financial transaction networks span several institutions and jurisdictions, where legal constraints, privacy requirements, and business confidentiality restrictions make data sharing infeasible or impractical. At the same time, suspicious patterns frequently cross institutional and national boundaries and are often intentionally structured in this manner to evade detection. This lack of observability leads to a significant degradation of the detection accuracy \cite{gige2026RS}.

\input{figures/aml_figure}

Prior work on subgraph FL frame this challenge as the missing-neighbor problem.  
\cite{Zhang2021_SubgraphFedLearning_FedSAGE, Zhang2024_DeepEfficient} tackle it by generating synthetic neighbor features, which can partially compensate for unavailable boundary information without directly sharing local topology or features across clients. While these approaches improve structural observability, they do not address the representation equivalence gap directly.
An orthogonal approach \cite{Liu2023_FedGGR, Aliakbari2024DecoupledSF_FedStruct} is to construct the global topology at the server, making remote structures available to clients. However, this approach leads to additional complexity at the server and may raise scalability or privacy concerns.

Our primary contribution is a per-step, layer-wise embedding exchange framework that synchronizes remote nodes' representations at each layer of the GNN forward pass, thereby restoring centralized message passing and addressing the representation-equivalence gap. Under the extended-subgraph assumption \ref{a:neighborhood} and the shared-parameter assumption, namely that all clients use identical GNN weights during a forward pass \ref{a:shared}, we establish an \emph{expressivity equivalence} (Theorem~\ref{theorem:expressivity}), demonstrating that each client computes the same representations as a centralized model would over the full graph. Notably, this guarantee holds for any forward pass, whether during training or inference.

Furthermore, we empirically isolate the respective contributions of embedding exchange and federated parameter aggregation by comparing their joint use against embedding exchange in isolation under fully local training. We evaluate on synthetic directed multigraphs with directed cycles, bicliques, and scatter-gather patterns, while varying the degree of cross-client partitioning. Our results indicate that the two mechanisms are complementary rather than interchangeable: embedding exchange alone closes only a limited portion of the performance gap, whereas their combination recovers most of it. Notably, this benefit materializes when exchanged embeddings are refreshed at every training step, as opposed to the stale, per-epoch updates considered in prior work~\cite{Naman2025_RemoteEmbeddings}.

%% file: figures/aml_figure.tex
\begin{figure}[t]
      \centering
        \resizebox{0.75\linewidth}{!}{%
          \begin{tikzpicture}[
              scale=0.9,
              >=latex,
              semithick,
              own/.style={circle, draw, minimum size=7.5mm, inner sep=0.5pt, font=\small},
              client/.style={draw, rounded corners=4pt, inner sep=4.5mm},
              ie/.style={->},
              ce/.style={->, dashed, gray!65!black}
          ]
    
          \begin{scope}
            \node[own, fill=blue!18]   (ca) at (0,    0.7) {$a$};
            \node[own, fill=orange!22] (cb) at (1.5,  0.7) {$b$};
            \node[own, fill=orange!22] (cc) at (1.5, -0.7) {$c$};
            \node[own, fill=blue!18]   (cd) at (0,   -0.7) {$d$};
    
            \draw[ce] (ca) -- (cb);
            \draw[ie] (cb) -- (cc);
            \draw[ce] (cc) -- (cd);
            \draw[ie] (cd) -- (ca);
    
            \node[client, fit=(ca)(cd), label={[font=\bfseries\small]above:Inst.~1}] {};
            \node[client, fit=(cb)(cc), label={[font=\bfseries\small]above:Inst.~2}] {};
    
            \node[font=\bfseries\small] at (0.75, -2.3) {(a) Directed cycle};
          \end{scope}
    
          \begin{scope}[xshift=5cm]
            \node[own, fill=blue!18]   (ss)  at (0,    0) {$s$};
            \node[own, fill=blue!18]   (si1) at (1.5,  1) {$i_1$};
            \node[own, fill=blue!18]   (si2) at (1.5,  0) {$i_2$};
            \node[own, fill=blue!18]   (si3) at (1.5, -1) {$i_3$};
            \node[own, fill=orange!22] (st)  at (3,    0) {$t$};
    
            \draw[ie] (ss)  -- (si1);
            \draw[ie] (ss)  -- (si2);
            \draw[ie] (ss)  -- (si3);
            \draw[ce] (si1) -- (st);
            \draw[ce] (si2) -- (st);
            \draw[ce] (si3) -- (st);
    
            \node[client, fit=(ss)(si1)(si2)(si3),
                  label={[font=\bfseries\small]above:Inst.~1}] {};
            \node[client, fit=(st),
                  label={[font=\bfseries\small]above:Inst.~2}] {};
    
            \node[font=\bfseries\small] at (1.5, -2.3) {(b) Scatter--gather};
          \end{scope}
    
          \begin{scope}[xshift=10.4cm]
            \node[own, fill=blue!18]   (bl1) at (0,    0.7) {$l_1$};
            \node[own, fill=blue!18]   (bl2) at (0,   -0.7) {$l_2$};
            \node[own, fill=orange!22] (br1) at (1.8,  1.0) {$r_1$};
            \node[own, fill=orange!22] (br2) at (1.8,  0)   {$r_2$};
            \node[own, fill=orange!22] (br3) at (1.8, -1.0) {$r_3$};
    
            \draw[ce] (bl1) -- (br1);
            \draw[ce] (bl1) -- (br2);
            \draw[ce] (bl1) -- (br3);
            \draw[ce] (bl2) -- (br1);
            \draw[ce] (bl2) -- (br2);
            \draw[ce] (bl2) -- (br3);
    
            \node[client, fit=(bl1)(bl2),
                  label={[font=\bfseries\small]above:Inst.~1}] {};
            \node[client, fit=(br1)(br2)(br3),
                  label={[font=\bfseries\small]above:Inst.~2}] {};
    
            \node[font=\bfseries\small] at (0.9, -2.3) {(c) Directed biclique};
          \end{scope}
    
          \end{tikzpicture}%
      }

      \caption{Three money laundering patterns spanning two institutions (Inst.~1 and Inst.~2). Solid black edges denote intra-institution transfers, while dashed gray edges cross institutional boundaries.}
      \label{fig:aml_patterns}
  \end{figure}

%% file: sections/2related-work.tex
\section{Related Work}


\paragraph{Synthetic neighbor generation.}
\cite{Zhang2021_SubgraphFedLearning_FedSAGE} tackle the missing neighbor problem by training a neighbor generator on each client that synthesizes feature vectors for missing boundary neighbors, extending local subgraphs with approximate representations of remote nodes without accessing their true features or identities. \citep{Zhang2024_DeepEfficient} build on this idea by generating synthetic neighbor embeddings that encode multi-hop structural context rather than raw input features, producing richer approximations of missing neighborhood structure. Both methods operate without direct communication of real node data, relying instead on locally trained generative models to approximate the missing information.

\paragraph{Global graph reconstruction.}
\citep{Liu2023_FedGGR} take a different approach by aggregating structural information from all clients at the server to reconstruct an approximate global graph topology. This allows cross-client connections to be reasoned about globally rather than approximated locally. \citep{Aliakbari2024DecoupledSF_FedStruct} similarly leverages explicit global graph structure to capture inter-client dependencies, but decouples structural learning from feature sharing: clients contribute structural information without exchanging node features or learned representations. Although these methods better capture global topology, they introduce additional complexity at the server and may raise scalability or privacy concerns due to the direct sharing of graph topology between clients.

\paragraph{Privacy-preserving graph expansion.}
\citet{wu2022federated} propose FedPerGNN, a privacy-preserving personalization method for decentralized user-item graphs. FedPerGNN expands each client's local subgraph via a trusted server that matches encrypted item IDs across clients and returns anonymized neighbor embeddings. These embeddings, however, serve only as static neighbor information rather than as layer-wise messages that approximate centralized propagation. While FedPerGNN focuses on privacy in decentralized recommendation, our work examines how layer-wise cross-client embedding exchange affects the representation-equivalence gap in subgraph pattern detection.

\paragraph{One-time feature or propagation exchange.}
\citep{Yao2023_FedGCN} perform a one-time pre-training exchange of aggregated multi-hop neighbor features, which are reused as fixed cross-client signals during federated training. This minimizes communication but cannot adapt to evolving representations, thereby limiting the capture of higher-order or dynamic patterns.
Similarly, \citep{lei2023federated} propose FedCog, which decouples local graphs into internal and border parts, exchanges precomputed propagation results, and then trains on the resulting node representations. For efficiency, these embeddings are computed only once, which reduces communication but prevents them from reflecting evolving layer-wise representations.
In contrast, our framework exchanges node embeddings at every layer during each forward pass, keeping them synchronized with the model computation.

\paragraph{Continuous embedding exchange at every training epoch.}     
A further line of work addresses the practical cost of exchanging node representations during FL. \citep{Naman2025_RemoteEmbeddings} introduce an embedding server through which clients push and pull node representations at every training epoch, and propose systems-level optimizations to reduce the communication and memory overhead of this exchange.
\citep{Li2024_HistoricEmbeddingFedGraph} further reduce this overhead by caching historical embeddings and selectively synchronizing cross-client representations based on training dynamics such as embedding staleness. Both approaches treat embedding exchange as a practical substitute for raw feature sharing under privacy constraints, and their primary contributions are in communication efficiency and scalability. However, neither work establishes the conditions under which the exchanged embeddings can help recover the node representations that a centralized model would compute over the full graph.

%% file: sections/4problem.tex
\section{Problem Formulation}

\subsection{Notation}
Let $G = (V, E)$ be a global graph with node features $\mathbf{x}: V \to \mathbb{R}^d$ and edge features $\mathbf{e}: E \to \mathbb{R}^{d_e}$, distributed across $C$ clients. Each client $c$ owns a subset of nodes $V_c^{\mathrm{own}}$, such that the collection $\{V^{\mathrm{own}}_c\}_{c=1}^{C}$ forms a disjoint partition of $V$, i.e., $V^{\mathrm{own}}_c \cap V^{\mathrm{own}}_{c'} = \emptyset$ for $c \neq c'$ and $\bigcup_{c=1}^{C} V^{\mathrm{own}}_c = V$.

Let $\mathcal{N}(v)$ denote the set of neighbors of $v$ in $G$, and let $E_c = \{(u,v) \in E \mid u, v \in V_c^{\mathrm{own}}\}$ denote the set of edges between nodes owned by client $c$. Each client's owned subgraph is augmented to an extended subgraph $G_c = (V_c^{\mathrm{own}} \cup V_c^{\mathrm{rem}},\; E_c \cup E_c^{\mathrm{rem}})$, where $V_c^{\mathrm{rem}}$ consists of neighboring nodes owned by other clients, henceforth referred to as remote nodes, and $E_c^{\mathrm{rem}}$ contains edges connecting owned nodes to remote nodes. Importantly, client $c$ has access to $E_c^{\mathrm{rem}}$ but does not have access to the features or labels of remote nodes $V_c^{\mathrm{rem}}$. Formally, $V_c^{\mathrm{rem}}$ and $E_c^{\mathrm{rem}}$ are defined as follows:
\[
V_c^{\mathrm{rem}} = \biggl(\,\bigcup_{v \in V_c^{\mathrm{own}}} \mathcal{N}(v)\biggr) \setminus V_c^{\mathrm{own}} \quad \text{and} \quad 
E_c^{\mathrm{rem}} = \{(u,v) \in E \mid |\{u,v\} \cap V_c^{\mathrm{own}}| = 1\}.
\]

Consider a shared $L$-layer message-passing GNN with an input embedding $\phi_{\mathrm{emb}}^{\theta_{\mathrm{emb}}}$ and parameters
$\Theta = \{\theta_{\mathrm{emb}}\} \cup \{\theta^{(l)}\}_{l=0}^{L-1}$, with layer-$l$ update:
\begin{equation}
\label{eq:mp-update}
\mathbf{h}_v^{(l+1)}
= \phi^{(l)}_{\theta^{(l)}}\!\left(
    \mathbf{h}_v^{(l)},\;
    \bigoplus_{u \in \mathcal{N}(v)}
    \psi^{(l)}_{\theta^{(l)}}\!\left(
        \mathbf{h}_v^{(l)},\, \mathbf{h}_u^{(l)},\, \mathbf{e}_{uv}
    \right)
\right),
\end{equation}
where $\mathbf{h}_v^{(0)} = \phi_{\mathrm{emb}}^{\theta_{\mathrm{emb}}}(\mathbf{x}_v)$, $\psi^{(l)}$ denotes the message function, $\bigoplus$ is a permutation-invariant aggregation operator, and $\phi^{(l)}$ denotes the node update function. We denote the latent representation of node $v$ computed by a centralized model at layer $l$ as $\mathbf{h}_v^{(l,\mathrm{cent})}$ and the respective latent representation computed on client $c$ as $\mathbf{h}_{v,c}^{(l)}$.

\subsection{Problem Setting}
\label{sec:setting}
We consider the following assumptions for our analysis:
\begin{enumerate}[label=\textup{(A\arabic*)}]
\item \label{a:neighborhood} For every client $c$, the extended subgraph $G_c$ is available, i.e.\ the remote nodes $V_c^{\mathrm{rem}}$ and remote edges $E_c^{\mathrm{rem}}$ are known to client $c$.
\item \label{a:shared} All clients share identical parameters $\Theta$ during the forward pass.
\end{enumerate}

\ref{a:neighborhood} requires that the partitioning scheme retains cross-client edges in each client's extended subgraph for practicality. In realistic distributed graph learning systems, the existence of a cross-client link is usually observable even when the remote node's features and labels are not. In transaction networks, for instance, each institution records transfers to external parties together with their public identifier (e.g. IBAN), but has no visibility into those parties' internal attributes \cite{Altman2023_RealisticAML}. Retaining these cross-client edges preserves the immediate neighborhood information, while multi-hop subgraph patterns spanning several clients remain unobservable to individual clients.

\ref{a:shared} requires that model parameters across clients are equivalent.
This parameter equivalence may arise from federated parameter aggregation (e.g. FedAvg \cite{McMahan2017_FedAvg}), from synchronous data-parallel training, from a centrally-trained model provided to all clients for inference, or from any other mechanism that yields the same $\Theta$ on every client. 
The equivalence we establish in \autoref{sec:expressivity-theorem} is a statement about a single forward pass under \ref{a:shared} and is independent of whether that pass is executed during training or inference. In the FL case, the equivalence of the parameters is maintained by aggregating local updates. During inference, it is inherited from the shared model. This decouples our framework from FL training: any mechanism that yields parameter equivalence is sufficient for the guarantee to apply.

For simplicity, the remainder of our theoretical analysis considers only the full-batch setting, where every node performs aggregation over its entire neighborhood.

\subsection{Structural Observability Problem}
\label{sec:structural-obs-problem-def}
Under standard distributed GNN training or inference, client $c$ can compute representations using only the features available locally. Since the features $\mathbf{x}_u$ of remote nodes $u \in V_c^{\mathrm{rem}}$ are unavailable, client $c$ cannot initialize their representations $\mathbf{h}_u^{(0)}$, producing local representations that diverge from their centralized counterparts.

Beyond degrading representation quality, missing neighbors can make certain pattern structures undetectable. When a pattern's defining substructure spans multiple clients, local learning becomes insufficient to recover the missing information. For example, a directed cycle split across two clients appears as an open path on each.
A shared model, whether obtained by parameter aggregation or supplied directly for inference, synchronizes \emph{what} the model has learned, but cannot restore \emph{what} each client can see. 

We formalize the structural observability problem as a representation-equivalence gap: the layer-$L$ deviation between what client $c$ computes at node $v$ and what a centralized model would produce. Let $\hat{\mathbf{h}}_{v,c}^{(l)}$ denote the representation that client $c$ computes for owned node $v$ when it aggregates over the full neighborhood $\mathcal{N}(v)$ as granted by \ref{a:neighborhood}, but substitutes a fixed placeholder $\mathbf{h}_{\mathrm{rem}} = \mathbf{0}$ for every remote representation $\mathbf{h}_u^{(l)}$ with $u \in V_c^{\mathrm{rem}}$. This isolates the effect of missing remote features and embeddings from the effect of missing cross-client edges: the client sees the connectivity, but not what is on the other end. Since the update is recursive, the substitution compounds across layers. After $L$ layers, $\hat{\mathbf{h}}_{v,c}^{(L)}$ depends on (i) the features of owned nodes within $v$’s $L$-hop neighborhood in the owned subgraph $(V_c^{\mathrm{own}}, E_c)$, and (ii) the placeholder $\mathbf{h}_{\mathrm{rem}}$ inserted at every cross-client edge encountered within $L$ hops. The representation-equivalence gap at node $v$ on client $c$ after $L$ layers is:
\[
\Delta_{v,c}^{(L)} = \mathbf{h}_v^{(L,\mathrm{cent})} - \hat{\mathbf{h}}_{v,c}^{(L)}.
\]
For nodes at partition boundaries, $\Delta_{v,c}^{(L)} \neq \mathbf{0}$ in general, and its magnitude grows with the fraction of remote neighbors and the depth of the network.

%% file: sections/5solution.tex
\section{Layer-Wise Embedding Exchange Framework}

We address the structural observability problem by synchronizing node representations across clients at every layer of the GNN forward pass via a per-step, layer-wise embedding exchange protocol. We call an owned node $u \in V^{\mathrm{own}}_c$ a \emph{boundary node} of client $c$ if $u \in V^{\mathrm{rem}}_{c'}$ for some $c' \neq c$, i.e.\ if $u$ lies on the other side of a cross-client edge owned by another client. After each layer $l$, every client sends the layer-$l$ representations of its boundary nodes to the clients that hold them as remote and in return receives the layer-$l$ representations of its own remote nodes from their owners (Figure~\ref{fig:exchange}). No central server mediates the exchange. 

All clients advance in lockstep: layer $l+1$ does not begin on any client until the layer-$l$ exchange has completed for all clients. Intuitively, the protocol restores at every layer the representations that a centralized GNN would produce over the full graph. Concretely, under this protocol the representation-equivalence gap $\Delta_{v,c}^{(L)}$ defined in Section~\ref{sec:structural-obs-problem-def} vanishes for every owned node $v$ on every client $c$. Theorem~\ref{theorem:expressivity} makes this formal.

\input{figures/figure1}

\begin{algorithm}[h!]
\caption{Layer-Wise Embedding Exchange (one forward pass; all clients in parallel)}
\label{alg:exchange}
\begin{algorithmic}[1]
\Require Extended subgraphs $\{G_c\}_{c=1}^{C}$, shared parameters $\Theta = \{\theta_{\mathrm{emb}}\} \cup \{\theta^{(l)}\}_{l=0}^{L-1}$, depth $L$.

\For{each client $c$ \textbf{in parallel}}
    \State $\mathbf{h}^{(0)}_{v,c} \gets \phi_{\mathrm{emb}}^{\theta_{\mathrm{emb}}}(\mathbf{x}_v)
    \quad \forall v \in V^{\mathrm{own}}_c$
\EndFor

\State \textsc{ExchangeLayer}($0$)
\Comment{each owner sends $\mathbf{h}^{(0)}_{u,c'}$ to every client $c$ with $u \in V^{\mathrm{rem}}_c$}

\For{$l = 0, 1, \ldots, L-1$}
    \For{each client $c$ \textbf{in parallel}}
        \For{$v \in V^{\mathrm{own}}_c$}
            \State $\mathbf{h}^{(l+1)}_{v,c} \gets
            \phi^{(l)}_{\theta^{(l)}}\!\left(
                \mathbf{h}^{(l)}_{v,c},\;
                \bigoplus_{u \in \mathcal{N}(v)}
                \psi^{(l)}_{\theta^{(l)}}\!\left(
                    \mathbf{h}^{(l)}_{v,c},\, \mathbf{h}^{(l)}_{u,c},\, \mathbf{e}_{uv}
                \right)
            \right)$
        \EndFor
    \EndFor

    \State \textsc{ExchangeLayer}($l+1$)
    \Comment{barrier: all owners broadcast $\mathbf{h}^{(l+1)}_{u,c'}$ before layer $l+2$}
\EndFor

\State \Return $\{\mathbf{h}^{(L)}_{v,c} : v \in V^{\mathrm{own}}_c\}$ on each client $c$
\end{algorithmic}
\end{algorithm}

At layer $l$, the content of the exchange is the set of representations $\{\mathbf{h}^{(l)}_{u,c'} : u \in V^{\mathrm{own}}_{c'}\}$ computed locally by each owning client $c'$, indexed by $u \in V$. Client $c$ receives only those entries for which $u \in V^{\mathrm{rem}}_c$ and sets $\mathbf{h}^{(l)}_{u,c} \gets \mathbf{h}^{(l)}_{u,c'}$. In particular, the raw features $\mathbf{x}_u$ and labels of remote nodes never leave their owner: at $l = 0$, the owner $c'$ applies the input embedding locally to obtain $\mathbf{h}^{(0)}_{u,c'} = \phi_{\mathrm{emb}}^{\theta_{\mathrm{emb}}}(\mathbf{x}_u)$ and transmits only this learned representation; raw features are never placed on the wire. This preserves the information access stated in Section~\ref{sec:setting}, where client $c$ has no access to the features or labels of $V^{\mathrm{rem}}_c$, while allowing \ref{a:exchange} to hold at every layer $l \geq 0$.

The protocol maintains the following invariant: entering layer $l+1$ on any client $c$, every node $u \in V^{\mathrm{own}}_c \cup V^{\mathrm{rem}}_c$ has its layer-$l$ representation materialized on $c$ and equal to the representation computed by its owner under the shared parameters. This is precisely the inductive hypothesis that drives Theorem~\ref{theorem:expressivity}. Algorithm~\ref{alg:exchange} gives the full protocol. Section~\ref{sec:expressivity-theorem} shows that it recovers centralized representations exactly.

\noindent\textbf{Exchange primitive.} \textsc{ExchangeLayer}($l$) is a one-shot collective in which every client $c'$ transmits $\bigl\{\mathbf{h}^{(l)}_{u,c'} : u \in V^{\mathrm{own}}_{c'} \cap \bigcup_{c \neq c'} V^{\mathrm{rem}}_c \bigr\}$ to each client $c$ that needs it. Per-layer communication complexity is $O\!\left(\sum_c |V^{\mathrm{rem}}_c| \cdot d\right)$, with $d$ the hidden width.

\subsection{Expressivity Equivalence}
\label{sec:expressivity-theorem}
We now show that the proposed framework recovers centralized representations exactly. We formalize the layer-wise embedding exchange as an additional condition:
\begin{enumerate}[label=\textup{(A\arabic*)}]
\setcounter{enumi}{2}
\item \label{a:exchange} After each layer $l$, every client $c$ receives $\mathbf{h}_{u}^{(l)}$ for all remote nodes $u \in V_c^{\mathrm{rem}}$ from their owning clients.
\end{enumerate}
\vspace{0.5em}

\begin{theorem}[Expressivity Equivalence]
\label{theorem:expressivity}
Under \textup{(A1)--(A3)} and full-batch inference, for every client $c$, every owned node $v \in V_c^{\mathrm{own}}$, and every layer $l \in \{0, \dots, L\}$:
\[
\mathbf{h}_{v,c}^{(l)} = \mathbf{h}_v^{(l,\mathrm{cent})}.
\]
\end{theorem}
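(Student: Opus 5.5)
We argue by induction on the layer index $l$, proving a slightly stronger invariant than the statement itself. For every client $c$ and every node $u \in V_c^{\mathrm{own}} \cup V_c^{\mathrm{rem}}$, the representation $\mathbf{h}_{u,c}^{(l)}$ materialized on $c$ after the layer-$l$ exchange equals $\mathbf{h}_u^{(l,\mathrm{cent})}$. Restricting this invariant to owned nodes gives Theorem~\ref{theorem:expressivity}. We strengthen the hypothesis to include remote nodes because the update~\eqref{eq:mp-update} at an owned node $v$ reads $\mathbf{h}_u^{(l)}$ for every $u \in \mathcal{N}(v)$, and such neighbors may be remote.

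\emph{Base case.} Take $l=0$ and an owned node $v \in V_c^{\mathrm{own}}$. Client $c$ holds $\mathbf{x}_v$ and computes $\phi_{\mathrm{emb}}^{\theta_{\mathrm{emb}}}(\mathbf{x}_v)$. By \ref{a:shared}, $\theta_{\mathrm{emb}}$ is the centralized parameter, so this value is exactly $\mathbf{h}_v^{(0,\mathrm{cent})}$. Now take a remote node $u \in V_c^{\mathrm{rem}}$. Its owner $c'$ is unique because the sets $V^{\mathrm{own}}$ partition $V$, and $c'$ computes $\mathbf{h}^{(0)}_{u,c'} = \mathbf{h}_u^{(0,\mathrm{cent})}$ by the same argument. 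By \ref{a:exchange}, client $c$ then sets $\mathbf{h}^{(0)}_{u,c} \gets \mathbf{h}^{(0)}_{u,c'}$, so the invariant holds for remote nodes as well.

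\emph{Inductive step.} Assume the invariant holds at layer $l$ and fix an owned node $v \in V_c^{\mathrm{own}}$. The first task is to check that client $c$ sees the full neighborhood $\mathcal{N}(v)$ together with the correct edge features. Each $u \in \mathcal{N}(v)$ is either owned by $c$, in which case $(u,v) \in E_c$, or owned by another client. In the second case $u \in V_c^{\mathrm{rem}}$ by the definition of $V_c^{\mathrm{rem}}$, and $(u,v) \in E_c^{\mathrm{rem}}$ because exactly one endpoint of the edge lies in $V_c^{\mathrm{own}}$. By \ref{a:neighborhood} both kinds of edge, and hence the features $\mathbf{e}_{uv}$, are available on $c$. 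The full-batch assumption guarantees that the aggregation ranges over all of $\mathcal{N}(v)$ rather than over a sampled subset. Multigraph edges can be handled by reading the aggregation as running over the incident edge multiset, which is identical in both settings.

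With the neighborhood settled, each message $\psi^{(l)}_{\theta^{(l)}}(\mathbf{h}^{(l)}_{v,c}, \mathbf{h}^{(l)}_{u,c}, \mathbf{e}_{uv})$ equals its centralized counterpart. This follows from the inductive hypothesis and from \ref{a:shared}, which makes $\theta^{(l)}$ identical on every client. The aggregation $\bigoplus$ is a deterministic, permutation-invariant function of the same multiset of messages, so it returns the same value on $c$ as in the centralized model. Applying $\phi^{(l)}_{\theta^{(l)}}$ then yields $\mathbf{h}^{(l+1)}_{v,c} = \mathbf{h}_v^{(l+1,\mathrm{cent})}$ for all owned nodes.

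To extend the invariant to remote nodes at layer $l+1$, fix $u \in V_c^{\mathrm{rem}}$. The same argument applied on its owner $c'$, which runs in lockstep, gives $\mathbf{h}^{(l+1)}_{u,c'} = \mathbf{h}_u^{(l+1,\mathrm{cent})}$. The barrier after layer $l+1$ together with \ref{a:exchange} then transfers this value to $c$. Induction up to $l = L$ completes the argument.

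The step requiring the most care is the neighborhood-coverage claim: that the extended subgraph $G_c$ contains exactly $\mathcal{N}(v)$ for every owned $v$, with no missing or duplicated edges. This is the only point where the problem's structure enters, and it relies on the definitions of $V_c^{\mathrm{rem}}$ and $E_c^{\mathrm{rem}}$ together with the disjointness of the ownership partition. Everything else in the argument is bookkeeping.
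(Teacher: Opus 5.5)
Your proof is correct and follows the paper's argument. Like the paper, you induct on $l$ with the invariant strengthened to all of $V_c^{\mathrm{own}} \cup V_c^{\mathrm{rem}}$, obtain neighborhood coverage from the definition of $G_c$ and (A1), get exact neighbor embeddings from the inductive hypothesis at the owner plus (A3), and conclude equality of the update by (A2); your explicit closing of the invariant for remote nodes at layer $l+1$ and your remarks on edge features and permutation invariance are minor tidyings of the same proof.
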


\begin{proof}
By induction on the layer index $l$. The base case $l = 0$ holds for every $u \in V_c^{\mathrm{own}} \cup V_c^{\mathrm{rem}}$: if $u \in V_c^{\mathrm{own}}$ then $\mathbf{h}_{u,c}^{(0)} = \phi_{\mathrm{emb}}^{\theta_{\mathrm{emb}}}(\mathbf{x}_u)$ by local initialization, and if $u \in V_c^{\mathrm{rem}}$ then \textsc{ExchangeLayer}$(0)$ delivers $\mathbf{h}_{u,c}^{(0)} = \phi_{\mathrm{emb}}^{\theta_{\mathrm{emb}}}(\mathbf{x}_u)$ from the owner under \ref{a:exchange}; since the centralized model applies the same input embedding under shared parameters \ref{a:shared}, $\mathbf{h}_{u,c}^{(0)} = \mathbf{h}_u^{(0,\mathrm{cent})}$ in either case. For the inductive step, assume $\mathbf{h}_{u,c}^{(l)} = \mathbf{h}_u^{(l,\mathrm{cent})}$ holds at layer $l$ for all nodes $u \in V_c^{\mathrm{own}} \cup V_c^{\mathrm{rem}}$ on every client $c$. Let $v \in V_c^{\mathrm{own}}$. By construction of $G_c$ and \ref{a:neighborhood}, $\mathcal{N}(v) \subseteq V_c^{\mathrm{own}} \cup V_c^{\mathrm{rem}}$, so every neighbor $u$ is available on client $c$: either computed locally (if $u \in V_c^{\mathrm{own}}$) or received via exchange (if $u \in V_c^{\mathrm{rem}}$, by \ref{a:exchange}). In both cases the inductive hypothesis gives $\mathbf{h}_{u,c}^{(l)} = \mathbf{h}_u^{(l,\mathrm{cent})}$. Substituting into \autoref{eq:mp-update} with shared parameters \ref{a:shared} yields $\mathbf{h}_{v,c}^{(l+1)} = \mathbf{h}_v^{(l+1,\mathrm{cent})}$.
A more detailed proof is available in \autoref{appendix:extended-proof}. 
\end{proof}

%% file: figures/figure1.tex
\begin{figure}[t]

  \centering
    \resizebox{0.70\linewidth}{!}{%
      \begin{tikzpicture}[
          scale=0.9,
          >=latex,
          semithick,
          own/.style={circle, draw, minimum size=7.5mm, inner sep=0.5pt, font=\small},
          gho/.style={circle, draw, dashed, minimum size=7.5mm, inner sep=0.5pt, font=\small\itshape},
          client/.style={draw, rounded corners=4pt, inner sep=4.5mm},
          ie/.style={->},
          ce/.style={->, dashed, gray!65!black},
          ex/.style={->, thick, red!75!black}
      ]
    
      \begin{scope}
        \node[own, fill=blue!18]   (a) at (0,    0.8) {$a$};
        \node[own, fill=blue!18]   (b) at (1.5,  0.8) {$b$};
        \node[own, fill=orange!22] (c) at (1.5, -0.7) {$c$};
        \node[own, fill=orange!22] (d) at (0,   -0.7) {$d$};
    
        \draw[ie] (a) -- (b);
        \draw[ce] (b) -- (c);
        \draw[ie] (c) -- (d);
        \draw[ce] (d) -- (a);
    
        \node[font=\bfseries\small] at (0.75, -1.8) {(a) Global graph $G$};
      \end{scope}
    
    
      \begin{scope}[xshift=4.5cm]
        \node[own, fill=blue!18]  (a1) at (0,    0.8) {$a$};
        \node[own, fill=blue!18]  (b1) at (1.5,  0.8) {$b$};
        \node[gho, fill=orange!8] (c1) at (1.5, -0.7) {$c$};
        \node[gho, fill=orange!8] (d1) at (0,   -0.7) {$d$};
    
        \draw[ie] (a1) -- (b1);
        \draw[ce] (b1) -- (c1);
        \draw[ce] (d1) -- (a1);
    
        \node[client, fit=(a1)(b1)(c1)(d1),
              label={[font=\bfseries\small]above:Client 1}] {};
      \end{scope}
    
      \begin{scope}[xshift=8.7cm]
        \node[gho, fill=blue!8]    (a2) at (0,    0.8) {$a$};
        \node[gho, fill=blue!8]    (b2) at (1.5,  0.8) {$b$};
        \node[own, fill=orange!22] (c2) at (1.5, -0.7) {$c$};
        \node[own, fill=orange!22] (d2) at (0,   -0.7) {$d$};
    
        \draw[ie] (c2) -- (d2);
        \draw[ce] (b2) -- (c2);
        \draw[ce] (d2) -- (a2);
    
        \node[client, fit=(a2)(b2)(c2)(d2),
              label={[font=\bfseries\small]above:Client 2}] {};
      \end{scope}
    
        \draw[ex, bend left=18]
          (a1.north) to node[midway, above, font=\scriptsize]{$\mathbf{h}^{(l)}_a$} (a2.north);
        \draw[ex, bend left=18]
          (b1.north) to node[midway, above, font=\scriptsize]{$\mathbf{h}^{(l)}_b$} (b2.north);
        \draw[ex, bend left=18]
          (c2.south) to node[midway, below, font=\scriptsize]{$\mathbf{h}^{(l)}_c$} (c1.south);
        \draw[ex, bend left=18]
          (d2.south) to node[midway, below, font=\scriptsize]{$\mathbf{h}^{(l)}_d$} (d1.south);
    
      \node[font=\bfseries\small] at (7.35, -2.6)
        {(b) Distributed view with layer-wise exchange};
    
      \end{tikzpicture}%
    }

  \caption{Layer-wise embedding exchange on a 4-cycle ($a \to b \to c \to d \to a$) split across two clients. \textbf{(a)}~The global graph: node colors preview the partition (Client 1: blue, Client 2: orange); cross-client edges are dashed gray and intra-client edges are solid black. \textbf{(b)}~The distributed view: each client materializes solid copies of its \emph{owned} nodes and dashed copies of the \emph{remote} endpoints; the same node appears at the same coordinates across panels for visual continuity. After every layer $l$, each owner broadcasts $\mathbf{h}^{(l)}_{u,c'}$ (red) to every client that holds $u$ as a remote node, and all clients synchronize at this barrier before advancing to layer $l{+}1$. Raw features never leave the owning client.}
  \label{fig:exchange}

\end{figure}

%% file: sections/6experiments.tex
\section{Experiments}
\label{sec:experiments}

\paragraph{Dataset creation.}
We use the graph generator of \cite{egressy2023provably_powerful} to produce directed multigraphs with seven injected structural patterns as labeled subgraphs: directed cycles of length 2--6 (\textbf{C2}--\textbf{C6}), scatter-gather (\textbf{S-G}), and biclique (\textbf{B-C}), following \cite{Altman2023_RealisticAML}. The base topology is a random circulant-like multigraph with the same parameters as in \cite{egressy2023provably_powerful}. Separate graphs are generated for training, validation, and test splits, each with $8192$ nodes, average degree $6$, and no node or edge features. The resulting label distribution is summarized in \autoref{appendix:label-dist}.
We describe the experimental environment in \autoref{app:compute}.

\paragraph{GNN model and training}
We adopt the Multi-PNA architecture of \cite{egressy2023provably_powerful}, which extends the Principal Neighborhood Aggregation (PNA) model \cite{corso2020pna} with reverse message passing, ego ID embeddings, and port numbering, a combination proven to be more expressive on directed multigraphs. All experiments use a 6-layer instance of this Multi-PNA model. Because the synthetic graphs carry no node or edge features, structural signal is supplied entirely by the ego and port embeddings, and we initialize node features as the constant vector $\mathbf{x}_v = \mathbf{1}$ for all $v$.

Each of the seven patterns corresponds to a binary node-classification head, trained jointly in a multi-label setting. Due to varying class imbalance, 
we report per-task PR-AUC for the minority class in each pattern as well as the macro-averaged minority-class PR-AUC across tasks.

Even though the expressivity equivalence (\autoref{theorem:expressivity}) assumes full-batch inference, this setting is not practical for large graphs. We therefore use mini-batch training with neighborhood sampling, a well-accepted approximation that is particularly effective in inductive settings \cite{hamilton2017_GraphSAGE}. We discuss the empirical impact of this approximation in \autoref{sec:main-results}.

\paragraph{Centralized training performance} 
\autoref{tab:reference_points} reports two reference points used throughout this section. The centralized PNA model (specified below), trained on the global graph without any partitioning, defines the per-task performance ceiling. 
Furthermore, the \emph{always-positive} classifier defines the trivial lower bound at $31.04\%$ macro-averaged PR-AUC across the seven detection tasks. 

\begin{table}[h!]
\centering
\caption{Reference points used throughout the experiments: per-task PR-AUC (\%) of the centralized PNA trained on the global graph without partitioning (upper bound) and a trivial always-positive classifier (lower bound). The \textbf{Macro} column reports the mean across the seven structural tasks.}
\label{tab:reference_points}

\resizebox{0.7\textwidth}{!}{%
\begin{tabular}{lcccccccc}
\toprule
\textbf{Method} &
\textbf{C2} & \textbf{C3} & \textbf{C4} & \textbf{C5} & \textbf{C6} &
\textbf{S-G} & \textbf{B-C} &
\textbf{Macro} \\
\midrule

\multicolumn{9}{l}{\emph{Reference points (no partitioning)}} \\
\cmidrule(lr){1-9}
Centralized PNA
  & \grad{99.63} & \grad{99.60} & \grad{99.68} & \grad{93.51} & \grad{88.57} &
    \grad{99.34} & \grad{99.30} &
    \grad{97.09} \\
Always-Positive
  & \grad{19.31} & \grad{33.15} & \grad{48.08} & \grad{32.25} & \grad{21.68} &
    \grad{30.77} & \grad{32.06} &
    \grad{31.04} \\

\bottomrule
\end{tabular}
}
\end{table}

\paragraph{Federated splits.}
We partition the global graph using two community-aware strategies that induce qualitatively different cross-client connectivity profiles. \emph{Louvain} \cite{Blondel2008_Louvain} maximizes modularity and therefore, places partition boundaries along the graph's natural low-density seams, producing few cross-client edges per client. \emph{METIS} \cite{Karypis1998_METIS} minimizes edge-cut subject to a hard balance constraint on partition size; the balance requirement forces cuts through denser regions when natural communities are unevenly sized, yielding higher cross-client edge density. In both cases, we retain cross-client edges in the client subgraphs (see \autoref{sec:setting}).

\subsection{Main Results}
\label{sec:main-results}
We organize the empirical evaluation around our contributions: (i) we first demonstrate the structural observability problem by measuring the centralized--distributed gap; (ii) we validate the expressivity equivalence guarantee by showing that layer-wise exchange closes that gap, and ablate per-layer freshness against a stale per-epoch exchange baseline OptimES \cite{Naman2025_RemoteEmbeddings}; (iii) we show that parameter aggregation is necessary for recovery by comparing Local+LE against FedAvg+LE, demonstrating that embedding exchange alone is insufficient; and (iv) we compare FedAvg \cite{McMahan2017_FedAvg} and Sync-SGD \cite{Chen2016RevisitingDS_SGD, Zinkevich2010_SGD} as parameter-aggregation regimes within the exchange framework. For completeness, \autoref{appendix:background-training-methods} summarizes the distributed training regimes considered throughout the experiments.

Each claim is evaluated under \emph{two} distinct partitioning regimes that induce qualitatively different cross-client structure: Louvain (modularity-based communities) and METIS (balanced $k$-way edge-cut). The per-task PR-AUC scores at $k=15$ clients are reported in \autoref{tab:results_15_clients_louvain} and \autoref{tab:results_15_clients_metis}, respectively.

\begin{table}[h!]
\centering
\caption{PR-AUC (\%) at 15 clients on \textbf{Louvain} split. LE = Layer-wise Exchange.
$\Delta$ columns are absolute differences from the indicated reference: $\Delta$ Local = vs.\ fully local; $\Delta$ FedAvg = vs.\ FedAvg.}
\label{tab:results_15_clients_louvain}

\resizebox{\textwidth}{!}{%
\begin{tabular}{lcccccccccc}
\toprule
\textbf{Method} &
\textbf{C2} & \textbf{C3} & \textbf{C4} & \textbf{C5} & \textbf{C6} &
\textbf{S-G} & \textbf{B-C} &
\textbf{Macro} &
\textbf{$\Delta$ Local} &
\textbf{$\Delta$ FedAvg} \\
\midrule

\multicolumn{11}{l}{\emph{Louvain split, $k=15$}} \\
\cmidrule(lr){1-11}
Fully Local
  & \grad{43.50}$\pm$0.17 & \grad{64.15}$\pm$0.25 & \grad{83.57}$\pm$0.15 & \grad{80.94}$\pm$0.18 & \grad{81.17}$\pm$0.16 &
    \grad{69.74}$\pm$0.39 & \grad{71.21}$\pm$0.20 &
    \grad{70.61} & -- & -- \\
FedAvg
  & \grad{56.48}$\pm$4.29 & \grad{70.52}$\pm$1.86 & \grad{85.15}$\pm$0.45 & \grad{78.42}$\pm$0.58 & \grad{74.49}$\pm$1.81 &
    \grad{75.09}$\pm$0.75 & \grad{76.57}$\pm$0.16 &
    \grad{73.82} & $\bm{+3.21}$ & -- \\
Sync-SGD
  & \grad{99.33}$\pm$0.00 & \grad{95.51}$\pm$0.07 & \grad{93.94}$\pm$1.00 & \grad{85.85}$\pm$0.00 & \grad{89.10}$\pm$0.03 &
    \grad{92.72}$\pm$2.13 & \grad{89.20}$\pm$3.84 &
    \grad{92.24} & $\bm{+21.63}$ & $\bm{+18.42}$ \\
\midrule
OptimES 
  & \grad{95.42}$\pm$0.33 & \grad{84.06}$\pm$2.17 & \grad{86.83}$\pm$0.37 & \grad{84.59}$\pm$0.13 & \grad{87.29}$\pm$0.09 & \grad{72.28}$\pm$0.14 & \grad{71.46}$\pm$0.15 & \grad{83.13} & $\bm{+12.52}$ & $\bm{+9.31}$ \\
\midrule
Fully Local + LE
  & \grad{88.99}$\pm$0.36 & \grad{59.07}$\pm$0.03 & \grad{79.16}$\pm$0.07 & \grad{76.56}$\pm$0.26 & \grad{77.76}$\pm$0.42 &
    \grad{66.48}$\pm$0.61 & \grad{68.83}$\pm$0.13 &
    \grad{73.84} & $\bm{+3.23}$ & $\bm{+0.02}$ \\
FedAvg + LE
  & \grad{95.06}$\pm$0.80 & \grad{89.66}$\pm$0.95 & \grad{87.12}$\pm$1.00 & \grad{84.42}$\pm$0.14 & \grad{87.70}$\pm$0.21 &
    \grad{73.64}$\pm$0.39 & \grad{71.91}$\pm$0.42 &
    \grad{84.22} & $\bm{+13.61}$ & $\bm{+10.40}$ \\
Sync-SGD + LE
  & \grad{99.10}$\pm$0.16 & \grad{95.35}$\pm$0.69 & \grad{95.06}$\pm$0.21 & \grad{86.60}$\pm$0.24 & \grad{89.72}$\pm$0.01 &
    \grad{96.22}$\pm$0.35 & \grad{96.08}$\pm$0.19 &
    \grad{94.02} & $\bm{+23.41}$ & $\bm{+20.20}$ \\

\bottomrule
\end{tabular}}

\end{table}

\begin{table}[h!]
\centering
\caption{PR-AUC (\%) at 15 clients on \textbf{METIS} split. LE = Layer-wise Exchange.
$\Delta$ columns are absolute differences from the indicated reference: $\Delta$ Local = vs.\ fully local; $\Delta$ FedAvg = vs.\ FedAvg.}
\label{tab:results_15_clients_metis}

\resizebox{\textwidth}{!}{%
\begin{tabular}{lcccccccccc}
\toprule
\textbf{Method} &
\textbf{C2} & \textbf{C3} & \textbf{C4} & \textbf{C5} & \textbf{C6} &
\textbf{S-G} & \textbf{B-C} &
\textbf{Macro} &
\textbf{$\Delta$ Local} &
\textbf{$\Delta$ FedAvg} \\
\midrule

\multicolumn{11}{l}{\emph{METIS split, $k=15$}} \\
\cmidrule(lr){1-11}
Fully Local
  & \grad{48.96}$\pm$3.95 & \grad{63.76}$\pm$0.45 & \grad{80.00}$\pm$0.37 & \grad{78.06}$\pm$0.27 & \grad{76.73}$\pm$0.71 &
    \grad{68.68}$\pm$0.51 & \grad{69.72}$\pm$0.42 &
    \grad{69.42} & -- & -- \\
FedAvg
  & \grad{61.05}$\pm$3.76 & \grad{68.51}$\pm$1.50 & \grad{80.48}$\pm$0.67 & \grad{76.71}$\pm$1.01 & \grad{71.32}$\pm$2.81 &
    \grad{71.41}$\pm$0.07 & \grad{74.06}$\pm$0.59 &
    \grad{71.93} & $\bm{+2.51}$ & -- \\
Sync-SGD 
  & \grad{99.50}$\pm$0.06 & \grad{92.91}$\pm$0.88 & \grad{89.44}$\pm$0.51 & \grad{82.89}$\pm$0.10 & \grad{85.44}$\pm$0.19 &
    \grad{85.35}$\pm$1.21 & \grad{82.04}$\pm$0.32 &
    \grad{88.22} & $\bm{+18.80}$ & $\bm{+16.29}$ \\
\midrule
OptimES 
  & \grad{76.69}$\pm$4.19 & \grad{69.44}$\pm$1.29 & \grad{81.91}$\pm$0.17 & \grad{80.55}$\pm$0.13 & \grad{83.62}$\pm$0.23 & \grad{71.44}$\pm$0.38 & \grad{72.23}$\pm$0.13 & \grad{76.55} & $\bm{+7.13}$ & $\bm{+4.62}$ \\
\midrule
Fully Local + LE
  & \grad{88.89}$\pm$0.18 & \grad{59.44}$\pm$0.07 & \grad{79.23}$\pm$0.25 & \grad{76.69}$\pm$0.12 & \grad{77.98}$\pm$0.15 &
    \grad{66.77}$\pm$0.42 & \grad{68.33}$\pm$0.04 &
    \grad{73.90} & $\bm{+4.48}$ & $\bm{+1.97}$ \\
FedAvg + LE
  & \grad{93.71}$\pm$0.08 & \grad{83.70}$\pm$3.62 & \grad{83.58}$\pm$0.02 & \grad{82.70}$\pm$0.14 & \grad{85.51}$\pm$0.06 &
    \grad{72.47}$\pm$0.31 & \grad{72.07}$\pm$0.12 &
    \grad{81.96} & $\bm{+12.54}$ & $\bm{+10.03}$ \\
Sync-SGD + LE
  & \grad{97.26}$\pm$0.54 & \grad{90.18}$\pm$0.94 & \grad{88.21}$\pm$0.34 & \grad{84.98}$\pm$0.13 & \grad{88.05}$\pm$0.02 &
    \grad{84.09}$\pm$1.44 & \grad{83.57}$\pm$0.37 &
    \grad{88.05} & $\bm{+18.63}$ & $\bm{+16.12}$ \\
\bottomrule
\end{tabular}}

\end{table}

\paragraph{The structural observability gap is empirically pronounced.} 
The Fully-Local and FedAvg rows of Tables~\ref{tab:results_15_clients_louvain} and~\ref{tab:results_15_clients_metis} show that partitioning the global graph induces a representation-equivalence gap between centralized and distributed models: macro PR-AUC drops from the centralized ceiling of $97.09\%$ to $70.61\%$ (Louvain) and $69.42\%$ (METIS) under fully local training, and is only marginally recovered by FedAvg ($73.82\%$ and $71.93\%$, respectively).
Although the magnitude varies, every pattern incurs a measurable loss under partitioning, confirming that local unidentifiability is a consequence of the distributed setting rather than a pattern-specific artifact.

\paragraph{Embedding exchange recovers the gap only when paired with parameter aggregation.}
Adding layer-wise exchange to fully local training lifts macro PR-AUC to $73.84\%$ (Louvain) and $73.90\%$ (METIS), matching plain FedAvg ($\Delta_{\text{FedAvg}} = +0.02, +1.97$pp) but falling $8$--$10$pp short of FedAvg + LE. Without synchronized parameters, the exchanged embeddings are no longer those a centralized model would compute, so \ref{a:shared} fails and the equivalence of Theorem~\ref{theorem:expressivity} no longer applies.

\paragraph{Layer-wise exchange closes the gap, and per-layer freshness is what does the work.}
Adding per-layer exchange to FedAvg lifts macro PR-AUC to $84.22\%$ (Louvain) and $81.96\%$ (METIS), gains of $+10.40$ and $+10.03$pp over plain FedAvg. FedAvg+LE outperforms the stale per-epoch alternative, OptimES, by $1.09$pp on Louvain and $5.41$pp on METIS. The widening gap indicates that per-layer freshness matters more when cross-client connectivity is denser, as stale embeddings are referenced more often per step and their drift from the current parameters compounds. Fresh embedding exchange dominates stale in both regimes, consistent with the assumption \ref{a:exchange}.

\paragraph{Synchronizing parameters per-step closes the residual gap.} 
Sync-SGD+LE lifts macro PR-AUC to $94.02\%$ (Louvain) and $88.05\%$ (METIS), an additional $+9.80$ and $+6.09$pp over FedAvg+LE. The residual gap left by FedAvg+LE comes from within-epoch parameter drift: FedAvg synchronizes model parameters only at epoch boundaries, so embeddings written at step $t$ are computed under client-specific parameters and \ref{a:shared} fails between syncs. Sync-SGD enforces \ref{a:shared} at each step, restoring the parameter-equivalence condition required by \autoref{theorem:expressivity} within each sampled computation graph.

\paragraph{Constant parameter synch. trains a model with reduced reliance on embedding exchange.} 
Plain Sync-SGD without cross-client exchange outperforms FedAvg+LE on both splits. It is within $1.78$pp of Sync-SGD+LE on Louvain, and matches the Sync-SGD+LE performance on METIS. We attribute this result to the ability of Sync-SGD to acquire more transferable information from the clients and to better capture the global topology in the model.
As model capacity improves, reliance on explicit embedding exchange decreases, explaining plain Sync-SGD’s strong performance. However, this improvement comes at significant communication cost (see \autoref{appendix:discussion} for the analysis).

\paragraph{Mini-batch sampling builds robustness but caps the ceiling.}
Mini-batch training has two opposing effects. Every forward pass operates on a stochastically sampled subgraph rather than the full $L$-hop neighborhood, helping the model to learn under partial information. However, that same sampling caps inference accuracy: the receptive field at any forward pass is incomplete, so layer-wise exchange can recover centralized representations only within each sampled receptive field rather than over the full neighborhood assumed by \autoref{theorem:expressivity}. This explains why even Sync-SGD+LE leaves a residual gap of $3.07$pp on Louvain and $9.04$pp on METIS to the centralized setting.

\subsection{Robustness to Federation Scale}
\label{sec:client-sweep}
\autoref{fig:macro_prauc_vs_clients} sweeps the number of clients $k \in \{3, 5, 10, 15\}$ under both partitioning strategies. Across both Louvain and METIS, our FedAvg + LE and Sync-SGD + LE methods dominate the Fully Local and FedAvg baselines at every client count. FedAvg + LE consistently outperforms FedAvg, and Sync-SGD + LE outperforms FedAvg + LE, with the margin over both baselines widening as $k$ grows. Sync-SGD + LE is also the least sensitive to client count: its curve remains nearly flat under both partitioning strategies, while Fully Local degrades sharply as $k$ grows.

The contrast between fresh per-step exchange and the stale per-epoch alternative tracks the partition-density argument made earlier: on Louvain, FedAvg+LE and OptimES are nearly indistinguishable at every $k$; on METIS, FedAvg+LE consistently leads OptimES by a clear margin, with no narrowing as the federation grows. Per-layer freshness therefore becomes more important as cross-client connectivity densifies, at every federation scale. 
We show the experimental runtime in \autoref{app:compute}.

\begin{figure}[h!]
  \centering
  \begin{subfigure}[t]{0.49\textwidth}
    \includegraphics[width=\linewidth]{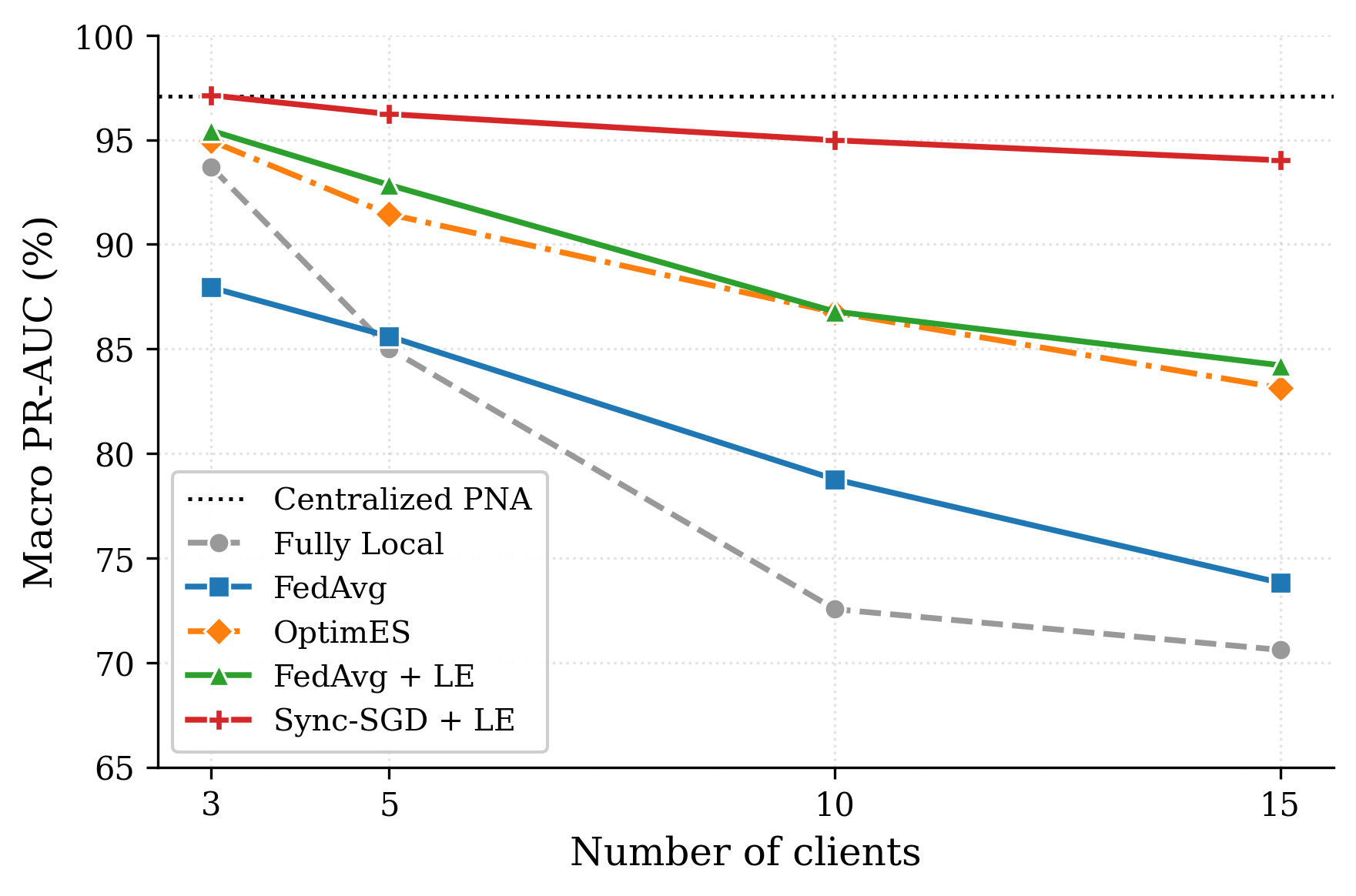}
    \caption{Macro PR-AUC as a function of the number of clients under the \textbf{Louvain} partitioning scheme.}
    \label{fig:macro_prauc_vs_clients_louvain}
  \end{subfigure}\hfill
  \begin{subfigure}[t]{0.49\textwidth}
    \includegraphics[width=\linewidth]{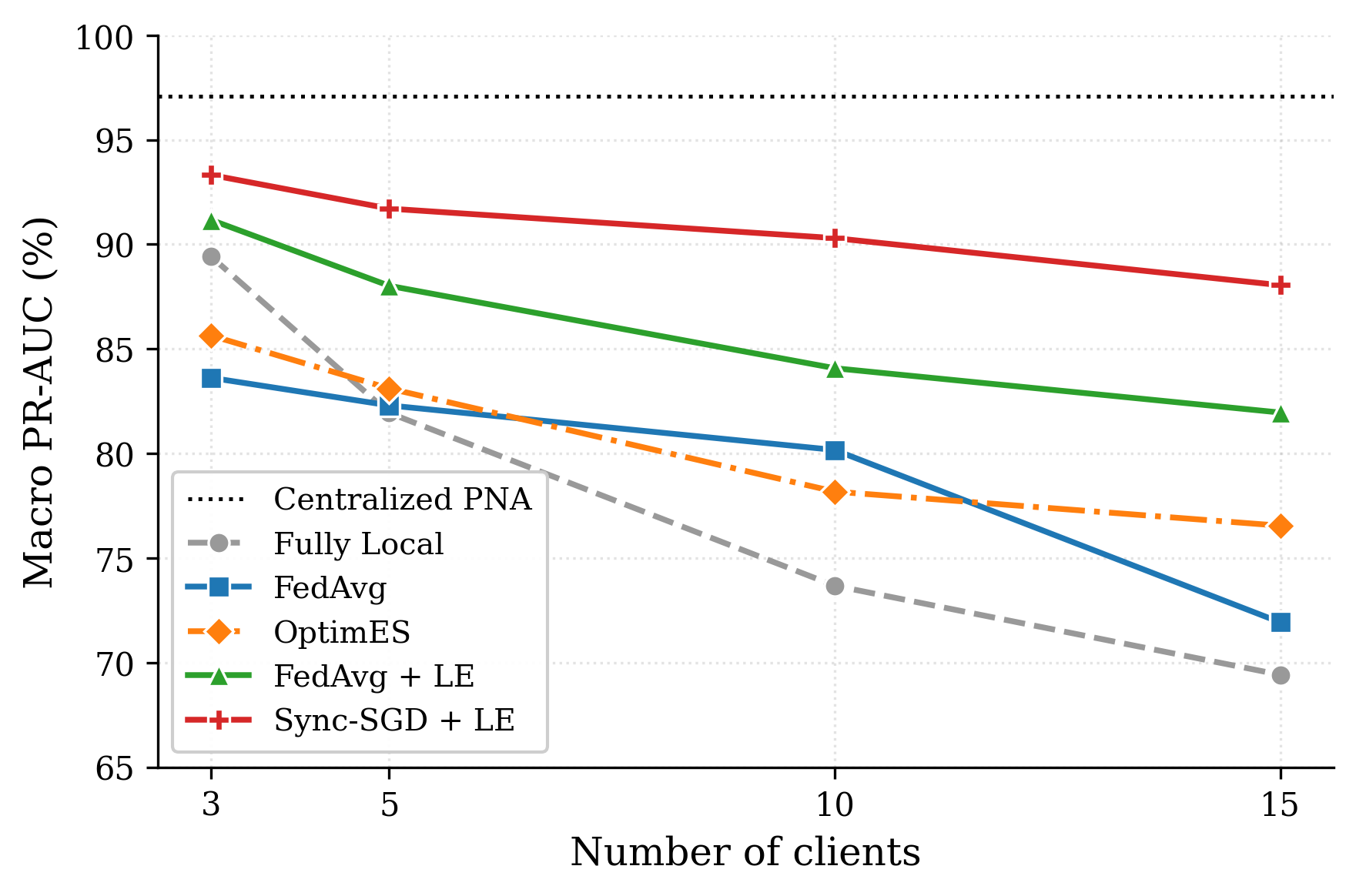}
    \caption{Macro PR-AUC as a function of the number of clients under the \textbf{METIS} partitioning scheme.}
    \label{fig:macro_prauc_vs_clients_metis}
  \end{subfigure}
  \caption{Macro PR-AUC as the federation scales from $3$ to $15$ clients under two partitioning strategies: (a) Louvain and (b) METIS, both with cross-client edges. The dotted horizontal line indicates the centralized performance ($97.09\%$), serving as an upper bound reference.}
  \label{fig:macro_prauc_vs_clients}
\end{figure}

%% file: sections/7conclusion.tex
\section{Conclusion}

This work investigates federated subgraph pattern detection in settings, where structural patterns span multiple clients. 
We formalize this problem as one of structural observability, where client-local message passing produces a representation-equivalence gap relative to centralized GNN computation. 
To bridge this gap, we introduce a layer-wise embedding exchange framework that synchronizes remote node representations during each GNN forward pass without sharing raw features or labels. Under the extended-subgraph and shared-parameter assumptions, we show that our layer-wise embedding exchange framework can recover centralized node representations in a full-batch setting. 

Experiments on synthetic directed multigraphs demonstrate that federated parameter aggregation is insufficient to recover the lost structural information and that combining it with layer-wise exchange can significantly improve performance. Notably, FedAvg with layer-wise exchange consistently outperforms FedAvg. 
In addition, comparisons with OptimES \citep{Naman2025_RemoteEmbeddings} show that our fresh, per-step embedding exchange outperforms its stale, per-epoch strategy, particularly under denser cross-client connectivity.
Future work includes reducing communication overhead, introducing stronger privacy protections for exchanged embeddings, and extending the framework to dynamic and heterogeneous real-world graph learning systems. Additional discussion of communication costs, privacy considerations, deployment challenges, and societal impact is provided in \autoref{appendix:discussion}.

%% file: appendices/extended-proof.tex
\section{Extended Proof for Expressivity Equivalence Theorem}
\label{appendix:extended-proof}

\begin{proof}
By induction on the layer index $l$.

\textit{Base case ($l = 0$).}
For each client $c$ and each owned node $v \in V_c^{\mathrm{own}}$, the local initialization sets
\[
\mathbf{h}_{v,c}^{(0)} = \phi_{\mathrm{emb}}^{\theta_{\mathrm{emb}}}(\mathbf{x}_v).
\]
For each remote node $u \in V_c^{\mathrm{rem}}$, $u$ is owned by some client $c'$, which computes
$\mathbf{h}_{u,c'}^{(0)} = \phi_{\mathrm{emb}}^{\theta_{\mathrm{emb}}}(\mathbf{x}_u)$ locally; \textsc{ExchangeLayer}$(0)$ then delivers this value to $c$ under \ref{a:exchange}, so
$\mathbf{h}_{u,c}^{(0)} = \phi_{\mathrm{emb}}^{\theta_{\mathrm{emb}}}(\mathbf{x}_u)$.
The centralized model applies the same input embedding under \ref{a:shared}, hence
$\mathbf{h}_{u,c}^{(0)} = \mathbf{h}_u^{(0,\mathrm{cent})}$ for every $u \in V_c^{\mathrm{own}} \cup V_c^{\mathrm{rem}}$.

\textit{Inductive hypothesis.}
Assume that for some $l \geq 0$, for all clients $c$ and all nodes $u \in V_c^{\mathrm{own}} \cup V_c^{\mathrm{rem}}$ (both owned and remote),
\[
\mathbf{h}_{u,c}^{(l)} = \mathbf{h}_u^{(l,\mathrm{cent})}.
\]

\textit{Inductive step ($l \to l+1$).}
Let $v \in V_c^{\mathrm{own}}$ be any owned node. We need to show that client $c$ computes $\mathbf{h}_{v,c}^{(l+1)} = \mathbf{h}_v^{(l+1,\mathrm{cent})}$.

\textit{Step 1: Neighborhood availability.}
By the definition of $V_c^{\mathrm{rem}}$ and \ref{a:neighborhood}, $\mathcal{N}(v) \subseteq V_c^{\mathrm{own}} \cup V_c^{\mathrm{rem}}$, so every neighbor of $v$ in the global graph is present on client $c$ either as an owned or a remote node.

\textit{Step 2: Correctness of neighbor embeddings.}
For each neighbor $u \in \mathcal{N}(v)$:
\begin{itemize}
\item If $u \in V_c^{\mathrm{own}}$, then $\mathbf{h}_{u,c}^{(l)}$ is computed locally by client $c$ and is exact by the inductive hypothesis.
\item If $u \in V_c^{\mathrm{rem}}$, then $u$ is owned by some other client $c'$, which computes $\mathbf{h}_{u,c'}^{(l)} = \mathbf{h}_u^{(l,\mathrm{cent})}$ by the inductive hypothesis applied at client $c'$. By \ref{a:exchange}, client $c$ receives this exact embedding, so $\mathbf{h}_{u,c}^{(l)} = \mathbf{h}_u^{(l,\mathrm{cent})}$.
\end{itemize}
In both cases, $\mathbf{h}_{u,c}^{(l)} = \mathbf{h}_u^{(l,\mathrm{cent})}$ for all $u \in \mathcal{N}(v)$.

\textit{Step 3: Layer update equivalence.}
By \ref{a:shared}, client $c$ applies the same layer functions $\psi_{\theta^{(l)}}^{(l)}$ and $\phi_{\theta^{(l)}}^{(l)}$ as centralized computation. Since both the self-embedding $\mathbf{h}_{v,c}^{(l)}$, all neighbor embeddings $\mathbf{h}_{u,c}^{(l)}$, and the edge features $\mathbf{e}_{uv}$ are identical to the centralized setting, substituting into \autoref{eq:mp-update} gives:
\[
\begin{aligned}
\mathbf{h}_{v,c}^{(l+1)}
&=
\phi_{\theta^{(l)}}^{(l)}\!\left(
\mathbf{h}_{v,c}^{(l)},\;
\bigoplus_{u \in \mathcal{N}(v)}
\psi_{\theta^{(l)}}^{(l)}\!\left(
\mathbf{h}_{v,c}^{(l)},\, \mathbf{h}_{u,c}^{(l)},\, \mathbf{e}_{uv}
\right)
\right) \\
&=
\phi_{\theta^{(l)}}^{(l)}\!\left(
\mathbf{h}_v^{(l,\mathrm{cent})},\;
\bigoplus_{u \in \mathcal{N}(v)}
\psi_{\theta^{(l)}}^{(l)}\!\left(
\mathbf{h}_v^{(l,\mathrm{cent})},\, \mathbf{h}_u^{(l,\mathrm{cent})},\, \mathbf{e}_{uv}
\right)
\right) \\
&= \mathbf{h}_v^{(l+1,\mathrm{cent})}.
\end{aligned}
\]

Since the partition is exhaustive ($\bigcup_c V_c^{\mathrm{own}} = V$), every node in $G$ is owned by exactly one client. By induction, for all $l \in \{0, \dots, L\}$ and all owned nodes $v \in V_c^{\mathrm{own}}$, $\mathbf{h}_{v,c}^{(l)} = \mathbf{h}_v^{(l,\mathrm{cent})}$.
\end{proof}

%% file: appendices/label-dist.tex
\section{Pattern Label Distribution}
\label{appendix:label-dist}

Each node is labeled according to seven structural pattern-detection tasks. Table~\ref{tab:pattern_distribution} summarizes the average positive-class prevalence of each pattern in the generated synthetic training dataset. The class prevalence remains consistent across the training, validation, and test splits. Notably, Cycle-4, Cycle-5, and Cycle-6 are the only tasks for which the positive class constitutes the majority, with prevalences of $52.0\%$, $67.1\%$, and $77.7\%$, respectively. This imbalance motivates our use of minority-class PR-AUC throughout the evaluation.

\begin{table}[h!]
\centering
\caption{Average positive-class prevalence of each structural pattern in the generated synthetic training dataset. The prevalence is consistent across the training, validation, and test splits.}
\vspace{0.5em}
\label{tab:pattern_distribution}

\setlength{\tabcolsep}{6pt}
\resizebox{0.6\textwidth}{!}{%
\begin{tabular}{lccccccc}
\toprule
\textbf{Statistic} &
\textbf{C2} & \textbf{C3} & \textbf{C4} & \textbf{C5} & \textbf{C6} &
\textbf{S-G} & \textbf{B-C} \\
\midrule

Observed (\%)
  & 19.3 & 33.7 & 52.0 & 67.1 & 77.7 & 31.4 & 31.5 \\

\bottomrule
\end{tabular}
}
\end{table}

%% file: appendices/compute-resources.tex
\section{Compute Resources}
\label{app:compute}

\paragraph{Experimental environment.}
Our framework is implemented on Python, PyTorch~\cite{paszke2019pytorch} and Ubuntu. All the experiments are conducted on a workstation with Ryzen 9 7950X, 2×32GB DDR5 RAM, and NVIDIA GeForce RTX 4090.

\paragraph{Runtime.}
Table~\ref{tab:compute_runtime} summarizes the approximate runtime for the main experiments. 
The runtime includes graph loading or generation, partition construction, model training, validation, and test evaluation. 
For embedding-exchange-based methods, the reported runtime includes the overhead of parameter synchronization and layer-wise embedding exchange.

\begin{table}[h]
\centering
\caption{Approximate compute resources required for the main experimental settings.}
\label{tab:compute_runtime}
\begin{tabular}{lccc}
\toprule
Experiment setting & Partition & Clients  & Runtime per seed \\
\midrule
Centralized PNA & - & -   & 1.5 hours \\
Fully Local & Louvain & 15   & 2.1 hours \\
FedAvg & Louvain & 15   & 2.2 hours \\
OptimES & Louvain & 15   & 2.3 hours \\
FedAvg + LE & Louvain & 15   & 2.3 hours \\
Sync-SGD + LE & Louvain & 15   & 2.5 hours \\
Fully Local & METIS & 15   & 1.6 hours \\
FedAvg & METIS & 15   & 1.9 hours \\
OptimES & METIS & 15   & 2.2 hours \\
FedAvg + LE & METIS & 15   & 2.4 hours \\
Sync-SGD + LE & METIS & 15   & 2.6 hours \\
Federation-scale sweep & Louvain/METIS & 3, 5, 10, 15   & 10 hours \\
\bottomrule
\end{tabular}
\end{table}

\paragraph{Total compute.}
Across all main experiments, we trained approximately 120 training runs, including centralized reference models, fully local baselines, FL baselines, stale embedding exchange baselines, and layer-wise embedding exchange variants across Louvain and METIS partitions. 
Each result was averaged over 3 random seeds. 
The total compute used for the reported experiments was approximately 250 GPU-hours.

%% file: appendices/background.tex
\section{Background on Training Methods}
\label{appendix:background-training-methods}

We consider a global graph $G=(V,E)$ distributed across $C$ clients. 
We use $G_c$ to denote the extended subgraph available to client $c$ for local training. 
Let $f_{\Theta}$ be a GNN with model parameters $\Theta$, and let $\mathcal{L}_c(\Theta;G_c)$ denote the loss evaluated on the training nodes owned by client $c$. 
Unless otherwise stated, the global objective is written as a weighted sum of client losses:
\begin{equation}
    \min_{\Theta} 
    \sum_{c=1}^{C} 
    \frac{K_c}{K}
    \mathcal{L}_c(\Theta;G_c),
    \qquad
    K=\sum_{c=1}^{C}K_c.
\end{equation}

Here, $K_c$ denotes the number of training samples owned by client $c$. 
The training methods below differ in whether they use the full graph or client-local graphs, and in how frequently they synchronize model parameters or node representations.

\paragraph{Centralized training.}
Centralized training assumes that the full graph $G$ and all training labels are available to a single learner. 
The GNN aggregates over complete neighborhoods $\mathcal{N}(v)$ for every node $v\in V$, and optimizes
\begin{equation}
    \min_{\Theta} \mathcal{L}_{\mathrm{cent}}(\Theta;G).
\end{equation}

At training step $t$, the model parameters are updated by
\begin{equation}
    \Theta^{(t+1)}
    \leftarrow
    \Theta^{(t)}
    -
    \eta
    \nabla_{\Theta}
    \mathcal{L}_{\mathrm{cent}}(\Theta^{(t)};G),
\end{equation}
where $\eta$ is the learning rate. 
Since centralized training performs message passing on the complete graph, it serves as the performance upper bound in our experiments.

\paragraph{Fully local training.} 
Fully local training removes all communication between clients. 
Each client $c$ maintains its own model parameters $\Theta_c$ and trains only on its local subgraph $G_c$. 
The local objective is
\begin{equation}
    \min_{\Theta_c} \mathcal{L}_c(\Theta_c;G_c),
\end{equation}
and the local update at step $t$ is
\begin{equation}
    \Theta_c^{(t+1)}
    \leftarrow
    \Theta_c^{(t)}
    -
    \eta
    \nabla_{\Theta_c}
    \mathcal{L}_c(\Theta_c^{(t)};G_c).
\end{equation}

Because neither parameters nor embeddings are exchanged, the models $\{\Theta_c\}_{c=1}^{C}$ may diverge freely across clients. 
Moreover, each client performs message passing only with the information available in $G_c$, so cross-client subgraph patterns may become locally unobservable.

\paragraph{Federated averaging (FedAvg).} 
FedAvg trains a shared global model through periodic parameter aggregation. 
At communication round $t$, the server broadcasts the current global model $\Theta^{(t)}$ to the selected clients. 
Each client initializes its local model with this global model and performs $E$ local training epochs on $G_c$:
\begin{equation}
    \Theta_c^{(t,e+1)}
    \leftarrow
    \Theta_c^{(t,e)}
    -
    \eta
    \nabla_{\Theta_c}
    \mathcal{L}_c(\Theta_c^{(t,e)};G_c),
    \qquad
    e=0,\ldots,E-1.
\end{equation}

After local training, clients upload their updated parameters to the server, which computes a weighted aggregation:
\begin{equation}
    \Theta^{(t+1)}
    \leftarrow
    \sum_{c=1}^{C}
    \frac{K_c}{K}
    \Theta_c^{(t,E)}.
\end{equation}

FedAvg synchronizes parameters only once per communication round. 
Therefore, clients share the same parameters immediately after aggregation, but their parameters may drift apart during local training. 
In graph learning, this means FedAvg can improve statistical learning over fully local training, but it does not by itself restore the missing cross-client messages required to recover centralized message passing.

\paragraph{Synchronous stochastic gradient descent (Sync-SGD).}
Sync-SGD synchronizes parameters after each mini-batch rather than after several local epochs. 
At step $t$, all clients train the same model $\Theta^{(t)}$. 
Each client samples a mini-batch $\mathcal{B}_c^{(t)}$ from its local graph and computes a local gradient:
\begin{equation}
    g_c^{(t)}
    =
    \nabla_{\Theta}
    \mathcal{L}_c(\Theta^{(t)};\mathcal{B}_c^{(t)}).
\end{equation}

The server aggregates these gradients using batch-size weights:
\begin{equation}
    g^{(t)}
    =
    \sum_{c=1}^{C}
    \frac{|\mathcal{B}_c^{(t)}|}
    {\sum_{c'=1}^{C}|\mathcal{B}_{c'}^{(t)}|}
    g_c^{(t)}.
\end{equation}

The global model is then updated as
\begin{equation}
    \Theta^{(t+1)}
    \leftarrow
    \Theta^{(t)}
    -
    \eta g^{(t)}.
\end{equation}

Because all clients receive the same updated parameters after each mini-batch, Sync-SGD maintains parameter equivalence throughout training at a much finer granularity than FedAvg.

%% file: appendices/discussion.tex
\section{Discussion}
\label{appendix:discussion}

\paragraph{Communication cost of Sync-SGD.}
A natural alternative to layer-wise embedding exchange is Sync-SGD, in which clients average the full parameter vector after every mini-batch step. Sync-SGD also restores per-step cross-client coupling but communicates the entire model at every synchronization, whereas our protocol transmits only the remote embeddings each client requires at the next message-passing hop. With $N$ clients, $S$ steps per local epoch, $L$ conv layers, hidden dimension $d$, $P$ trainable parameters, and $\overline{|R|}$ remote embeddings exchanged per (client, layer, step), the per-epoch communication ratio is 

\[
\frac{V_{\text{FedAvg+LE}}}{V_{\text{Sync-SGD}}}
=
\frac{1}{S}
+
\frac{L\,d\,\overline{|R|}}{P}.
\]

The first term reflects that FedAvg+LE synchronizes parameters once per epoch rather than once per step, while the second compares the total volume of exchanged remote embeddings across all layers against the full model size. 
Both terms are small when the model is large relative to the per-step exchanged embeddings, meaning that Sync-SGD becomes costly for large models because it communicates the full parameter vector, whereas our framework exchanges only the required remote embeddings.
Our setting yields a ratio on the order of $10^{-1}$. Sync-SGD therefore offers the same form of per-step coupling that LE provides, but at a substantially higher communication cost.

\paragraph{The trade-off between accuracy and communication.}
Layer-wise embedding exchange improves cross-client observability but also introduces additional communication overhead. 
In our framework, clients exchange node embeddings at each GNN layer, and the communication cost scales with the number of remote nodes, the hidden dimension, and the number of layers. 
This cost is higher than in standard FedAvg, where only model parameters are exchanged periodically, and higher than in stale or epoch-wise embedding exchange methods. 
The benefit is that fresh layer-wise embeddings more closely match the computation performed by a centralized GNN, whereas stale embeddings may no longer correspond to the current model state.
This trade-off is application-dependent. 
For AML and other high-stakes detection problems, the additional communication cost can be justified if it improves detection accuracy. 
False negatives may allow illicit financial activity to remain undetected, while false positives can trigger costly manual investigations. 
Moreover, financial institutions can face severe regulatory penalties and reputational harm when laundering activity goes undetected. 
In such settings, improving the ability to detect cross-institution patterns can justify a higher communication budget.

\paragraph{Privacy considerations.}
Although our method avoids sharing raw features, labels, or local topology, the exchanged embeddings may still encode information about local features, neighborhoods, and model states, allowing a semi-honest client to potentially infer private attributes, membership, or structural properties. Several mitigation strategies could reduce this risk. Adding local or global differential privacy~\cite{abadi2016deep,dwork2025differential} noise to embeddings trades utility for stronger privacy guarantees. Secure aggregation~\cite{bonawitz2017practical} and cryptographic protocols can further restrict access to authorized participants. Additionally, exchanging compressed, quantized, or projected embeddings may limit the amount of recoverable information. Future work should evaluate privacy leakage through membership, attribute, and link inference attacks, and quantify the trade-off between accuracy and privacy.

\paragraph{Further reducing communication costs and privacy leakage.}
One possible way to reduce both communication cost and privacy leakage is to exchange embeddings selectively rather than universally. 
Instead of exchanging all remote node embeddings at every layer, clients could restrict exchange to boundary regions that are likely to participate in cross-client patterns. 
Node selection could be guided by structural heuristics, anomaly scores, learned attention weights, or uncertainty estimates.
Another possibility is clustered or hierarchical embedding exchange, where clients exchange embeddings primarily within smaller groups. 
Clients could be grouped according to transaction volume, geographic region, business relationship, or observed cross-client connectivity, with frequent exchange restricted to clients within the same cluster.

\paragraph{Implications for real-world cross-client pattern detection.}
In realistic applications such as anti-money laundering (AML), transaction graphs are distributed across financial organizations, institutions, or jurisdictions. 
Our framework addresses this setting by restoring the missing message-passing signals at partition boundaries, allowing clients to compute representations that more closely resemble those of a centralized GNN. 
However, deploying such a framework in real-world AML systems would require additional infrastructure beyond the algorithmic mechanism itself. 
Importantly, real financial graphs are dynamic: transactions arrive continuously and suspicious patterns evolve over time. 
Moreover, real-world data heterogeneity is likely to be substantially stronger than in our synthetic benchmark, since institutions can differ in customer populations, transaction volumes, regulatory environments, and local risk standards. 
We therefore view extending our framework to dynamic, heterogeneous, and operationally constrained AML environments as an important direction for future work.

\paragraph{Societal impact.}
Improving cross-client financial crime detection has clear societal value. 
Money laundering is often connected to broader criminal activity, including fraud, corruption, human trafficking, drug trafficking, weapons trafficking, and terrorist financing~\cite{safdari2015social}. 
More effective detection tools can help institutions and regulators identify suspicious activity that would remain invisible from any single local view. 
At the same time, such systems must be deployed carefully, since financial data is highly sensitive and false accusations can harm individuals and organizations. 
The goal is therefore not only improving detection accuracy, but also designing systems that balance effectiveness, privacy, fairness, auditability, and regulatory compliance. 
Our work contributes to this direction by identifying structural observability as a core challenge and by showing how cross-client representation exchange can help uncover financial crime patterns that span multiple organizations, institutions, or jurisdictions.